\algrenewcommand\textproc{\texttt}
\let\float@addtolists\relax\makeatother
\pgfplotsset{compat=newest}
\newcommand{\m}[1]{\mathbf{#1}}
\theoremstyle{plain}
\newtheorem{mytheorem}{\textbf{Theorem}}
\theoremstyle{definition}
\newtheorem{mydefinition}{\textbf{Definition}}
\newtheorem{myproblem}{\textbf{Problem}}
\definecolor{myorange}{RGB}{244,106,18} 
\definecolor{myblue}{RGB}{0,111,190}    
\definecolor{mygreen}{RGB}{0,127,128}   
\definecolor{myred}{RGB}{228,46,36}     
\definecolor{myyellow}{RGB}{198,148,34} 
\definecolor{mydark}{RGB}{114,44,114}   
\definecolor{mymiddle}{RGB}{144,44,144} 
\definecolor{mylight}{RGB}{167,44,167}  
\definecolor{item1}{RGB}{255,171,164}  
\definecolor{item2}{RGB}{142,202,206}
\definecolor{item2p}{RGB}{66,101,175}
\begin{document}

\title[]{
Bridging the Gap Between Layout Pattern Sampling and Hotspot Detection via Batch Active Learning
}

\iftrue
\author{Haoyu Yang}
\affiliation{
    \institution{CSE Department, CUHK}
}
\email{hyyang@cse.cuhk.edu.hk}

\author{Shuhe Li}
\affiliation{
    \institution{CSE Department, CUHK}
}
\email{shli@cse.cuhk.edu.hk}

\author{Cyrus Tabery}
\affiliation{
    \institution{ASML Brion Inc.}
}
\email{cyrus.tabery@asml.com}

\author{Bingqing Lin}
\affiliation{
    \institution{Shenzhen University}
}
\email{bqlin@szu.edu.cn}

\author{Bei Yu}
\affiliation{
    \institution{CSE Department, CUHK}
}
\email{byu@cse.cuhk.edu.hk}

\renewcommand{\shortauthors}{H.~Yang et al.}
\fi

\begin{abstract}
Layout hotpot detection is one of the main steps in modern VLSI design.
A typical hotspot detection flow is extremely time consuming due to the computationally expensive mask optimization and lithographic simulation.
Recent researches try to facilitate the procedure with a reduced flow including feature extraction, training set generation and hotspot detection,
where feature extraction methods and hotspot detection engines are deeply studied.
However, the performance of hotspot detectors relies highly on the quality of reference layout libraries which are costly to obtain and 
usually predetermined or randomly sampled in previous works.
In this paper, we propose an active learning-based layout pattern sampling and hotspot detection flow,
which simultaneously optimizes the machine learning model and the training set that aims to achieve similar or better hotspot detection performance with much smaller number of training instances.
Experimental results show that our proposed method can significantly reduce lithography simulation overhead while attaining satisfactory detection accuracy on designs under both DUV and EUV lithography technologies.
\end{abstract}

\maketitle

\section{introduction}
\label{sec:intro}
Along with aggressive feature size scaling, even equipped with various resolution enhancement techniques and hierarchical design strategy, modern chip designs are more and more complicated and greatly challenged by manufacturability issues.
VLSI layout hotspot detection is one of the most critical steps in manufacturability-aware design, which is costly to estimate because of the complicated mask optimization and lithography simulation.
Many researches have been conducted to facilitate the procedure which usually share a flow as shown in \Cref{fig:train-set}, 
including \textit{feature extraction}, \textit{training set generation} and \textit{hotspot detection}.

Feature extraction aims to convert layout geometry information (e.g.~density \cite{HSD_SPIE2015_Matsunawa,HSD_TCAD2014_Wen}, frequency \cite{HSD_DAC2013_Zhang,HSD_DAC2017_Yang} and design rule \cite{HSD_TCAD2015_Yu}) into reduced mathematical representations,
which are expected to improve hotspot detection accuracy.
Recently, deep neural networks also exhibit powerful feature learning ability that can obtain layout representations without prior knowledge \cite{HSD_DAC2017_Yang,HSD_SPIE2017_Yang,HSD_SPIE2016_Matsunawa,HSD_JM3_2016_Shin}.
In the hotspot detection stage, all selected samples and labels are fed into hotspot detection engines based on, mostly, pattern matching and machine learning.
In a pattern matching flow, similar patterns within a specific radius are clustered together based on the constraints of translation, area and/or edge displacements \cite{HSD_TCAD2014_Wen,HSD_DAC2017_Chang,HSD_DAC2017_Chen}.
Lithography simulation will be performed on the representative clips results from which will be then labeled to the whole cluster.
Above process indicates that fuzzy matching results are drastically affected by in-cluster variance.
Although aggressive constraints can be introduced to ensure a low in-cluster variance, additional cluster count will significantly increase lithography simulation overhead.
On the other hand, machine learning technologies tackle the problem through fitting layout representations into efficient machine learning models. 
\cite{HSD_TCAD2015_Yu,HSD_ASPDAC2012_Ding} employ support vector machine for efficient hotspot detection.
\cite{HSD_SPIE2015_Matsunawa,HSD_ICCAD2016_Zhang} enhance hotspot detectors with boosting algorithms and additional learning strategies.
\cite{HSD_DAC2017_Yang,HSD_SPIE2017_Yang,HSD_SPIE2016_Matsunawa,HSD_JM3_2016_Shin} adopt emerging deep neural networks that automatically learn layout representations and perform classification.
Still, overfitting problem is inevitable due to weakly distributed training data.

Previous works show that although pattern matching-based methods and machine learning-based methods exhibit different functionalities, they all rely highly on the quality of reference layout libraries.
For example, in-cluster pattern variance directly affects pattern matching results and pattern diversity contributes to the generality of trained machine learning models.
Layout pattern sampling problems are addressed by several works that are, to some extent, related to clustering approaches. 
Representative methods include clustering on frequency domain \cite{HSD_DAC2013_Zhang,HSD_JM3_2015_Shim}, Bayesian clustering \cite{HSD_JM3_2016_Matsunawa}, and clustering based on layout topology \cite{HSD_DAC2017_Chang,HSD_DAC2017_Chen,HSD_JM3_2015_Shim,HSD_DAC2012_Guo}. 
However, sampling and hotspot detection are mostly conducted exclusively which ignores the beneath integrity between them.

\begin{figure*}[tb!]
	\centering
	\includegraphics[width=.88\linewidth]{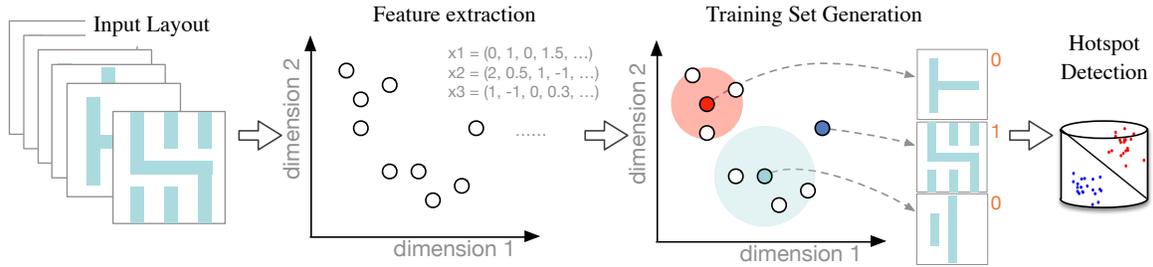}
	\caption{A conventional process of layout hotspot training set and detection model generation.}
	\label{fig:train-set}
\end{figure*}

In this paper, we will propose an active learning-based framework that can bridge the gap between the layout pattern sampling procedure and the hotspot detection problem.
Active learning targets at machine learning problems with massive data that is costly and time consuming to label.
A major step of active learning is querying instances to determine whether the instance should be labeled and added into the training set from a perspective of machine learning model generality \cite{ATL_TR2010_Settles}.
Representative querying strategies include uncertainty sampling (US \cite{ATL_SIGIR1994_Lewis}), query by committee (QBC \cite{ATL_ML1997_Freund}),
and expected model change (EMC \cite{ATL_ICDM2013_Cai}).
US aims to find the instances which the prediction model is most uncertain about and have the posterior probability around 0.5, QBC selects instances based on the disagreement among multiple classifiers and EMC labels most influential data in terms of the existing model.
A common idea behind these strategies is labeling instances that are hardly distinguished by the classifier.
However, there are several drawbacks of existing active learning strategies:
(1) only one sample is selected in each iteration in most active learning flows which is lacking in efficiency;
(2) machine learning models have to be retrained from raw state once the training set is updated;
(3) training set diversity is not considered in sampling flow which might cause serious overfitting problem \cite{ATL_TR2010_Settles,ATL_JMLR2001_Tong,ATL_ICML2000_Greg}.
Although K-L divergence on posterior probabilities of unlabeled samples can be applied for diversity analysis \cite{ATL_TPAMI2015_Shayok}, the effectiveness is limited on binary classification problems.

To address these concerns, we propose a batch mode active learning method that considers both model uncertainty and training set diversity.
We embed the active learning engine into deep neural networks thus data sampling and incremental model training can be conducted alternatively.
Guaranteed by the on-line property of stochastic gradient descent, we only need to finetune the neuron weights according to new labeled instances instead of training model from scratch in each iteration. 
The rapid development of deep neural networks makes it possible to learn representative features from raw image.
We take advantage of this characteristic and construct a diversity matrix of automatically learned features which will contribute as a partial criterion for data sampling in each iteration.
The main contributions of this paper are listed as follows:
\begin{itemize}
	\item A novel layout pattern sampling and hotspot detection flow is proposed to simultaneously optimize training set and machine learning model.
	\item We develop a batch mode active learning engine that samples multiple instances in each iteration according to the training set diversity and data uncertainty, 
	where a specific distance metric is designed to guarantee a convex objective that makes the sampling procedure more efficient.
	\item We conduct experiments on metal layers under 7$nm$ and 28$nm$ technology nodes which demonstrate the generality of the proposed flow that significantly increases hotspot detection accuracy while minimizing lithography simulation overhead.
\end{itemize}

The rest of this paper is organized as follows.
\Cref{sec:prelim} introduces basic terminologies and definition.
\Cref{sec:algo} lists theoretical and algorithmic details.
\Cref{sec:exp} presents experiment settings and results, followed by conclusion in \Cref{sec:conclu}.

\section{Preliminaries}
\label{sec:prelim}
This section introduces some terminologies and related problem formulation.
Throughout this paper, scalers are written as lowercase letters (e.g.~$x$), vectors are bold lowercase letters (e.g.~$\m{x}$) and matrices are represented as bold uppercase letters (e.g.~$\m{X}$).
Particularly, we use $J_n(\cdot)$ to represent the Bessel function of the first kind of order $n$.
The framework evaluation metrics are defined as follows.
 
\begin{mydefinition}[Hit]
	A hit is defined as when the detector reports hotspot on a clip of which at least one defect occurs at the core region. 
	We also denote the ratio between number of hits and total hotpsot clips as detection accuracy. 
\end{mydefinition}
\begin{mydefinition}[Extra]
	An extra is defined as when the detector reports hotspot on a clip of which no defect occurs at the core region. 
\end{mydefinition}
\begin{mydefinition}[Litho-clip]
	A litho-clip is a clip in the training set or an extra that is labeled hotspot or non-hotspot based on results of lithography simulation.
	The count of litho-clips reflects the lithography simulation overhead.
\end{mydefinition}
According to the evaluation metrics above, we define the problem of layout pattern sampling and hotspot detection (PSHD) as follows.
\begin{myproblem}[PSHD]
	Given a layout design, the objective of PSHD is sampling representative clips that will generalize the hotspot pattern space and maximize the machine learning model generality, i.e.~,
	maximizing the detection accuracy while minimizing the number of litho-clips.
\end{myproblem}

\section{The Algorithm}
\label{sec:algo}

In this section, we will discuss the details of our pattern sampling and hotspot detection flow, including the lithography proximity effect, the batch active sampling algorithm and some analysis.

\begin{figure}[tb!]
	\centering
	\includegraphics[width=.88\linewidth]{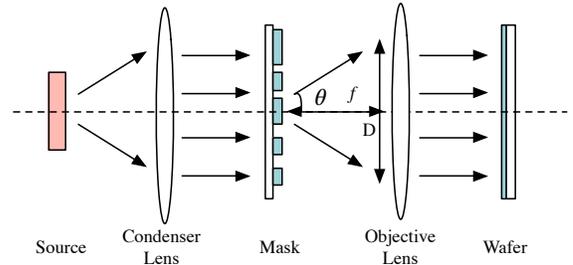}
	\caption{A simple lithography imaging system.}
	\label{fig:imaging-sys}
\end{figure}
\subsection{Lithography Proximity Effect}
\label{sec:lpe}
Most challenges in optical lithography come from the proximity effects caused by diffraction as light goes through the mask stage, as shown in \Cref{fig:imaging-sys}.
In a lithography imaging system, the electric field of the diffracted pattern is given by the Fourier transform of the original mask pattern.
Afterwards, diffracted patterns will be collected by the objective lens to project images on the wafer.
Because of the limited size of objective lens, higher order diffraction patterns will be discarded when forming the image on the wafer that results in a lower pattern fidelity \cite{DFM_B2008_Mack}.
Typically, to ensure the mask image can be transferred onto the wafer as accurate as possible, at least the zero and $\pm1$st diffraction order should be captured by the objective lens.
Accordingly, the smallest design pitch can be defined as \Cref{eq:resolution},
\begin{align}
	\label{eq:resolution}
	\dfrac{1}{p}=\dfrac{NA}{\lambda},
\end{align}
where $p$ denotes design pitch, $\lambda$ is the wavelength of the light source and $NA$ is the numerical aperture of the objective lens
which determines how much information can be collected by the objective lens and is given by
\begin{align}
	\label{eq:na}
	NA=n \sin \theta_{\max}=\dfrac{D}{2f},
\end{align}
where $n$ is the index of refraction of the medium, $\theta_{\max}$ is the largest half-angle of the diffraction light that can be collected by the objective lens,
$D$ denotes the diameter of physical aperture seen in front of the objective lens and $f$ represents the focal length \cite{DFM_B2004_John}.

The existence of diffraction makes it also interesting to analyze the minimum distance when two shapes stop affecting the aerial images of each other.
Fraunhofer diffraction occurs in classic lithography system, where the diffracted patten is determined by the Fraunhofer diffraction integral \cite{DFM_B2004_John}.
For simplicity, we consider the contact hole as an example whose diffraction pattern resembles the Airy disk.
The light intensity in terms of observation angle $\theta$ at the entrance of the objective lens is shown in \Cref{eq:airy}.
\begin{align}
	\label{eq:airy}
	I(\theta)= (\dfrac{2J_1 (kr \sin \theta)}{kr \sin \theta})^2 \cdot I_0,
\end{align}  
where $I_0$ denotes center intensity of airy disk, $r=\frac{D}{2}$ is the radius of the entrance pupil and $k=\frac{2 \pi}{\lambda}$ is the wavenumber.
According to the properties of Bessel function, dark regions of Airy disk that correspond to zeros of $I(\theta)$ appear periodically with a degradation of total energy.
The total energy within an observation angle can be derived by integrating \Cref{eq:airy} over $\theta$,
\begin{align}
	P(\theta)=[1-J_0^2(kr \sin \theta)-J_1^2(kr \sin \theta)],
\end{align}
which reflects by how much the diffraction information can be collected.
If we pick the 6$th$ zero point of $I(\theta)$ at $kr\sin \theta\approx19$, we can derive $P(\theta)\approx96.73\%$, which is the fraction of diffraction collected with in a given window size.
Besides, we assume $n=1$ in the air,
\begin{align}
	\label{eq:maxdis}
	\sin \theta = \dfrac{19}{kr}=NA.
\end{align}
Combine \Cref{eq:maxdis} and \Cref{eq:na},
\begin{align}
	\label{eq:isodis}
	D=6.05 \cdot \dfrac{\lambda}{NA}.
\end{align}
Here $D$ determines the minimum distance when two shapes can be regarded as isolated patterns,
which can be derived to be 230$nm$ using $NA=0.35$ and $\lambda=13.5nm$ under extreme ultraviolet (EUV) lithography technologies.

\subsection{Diversity Aware Batch Sampling}
\label{sec:sample}

\begin{figure*}[t!]
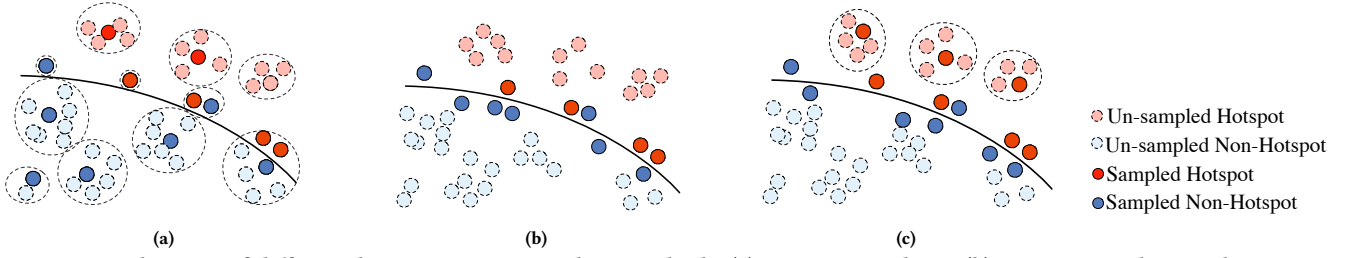

	\centering
	\subfloat[]{\includegraphics[width=.274\textwidth]{sample-pm} \label{fig:sample-pm}}
	\subfloat[]{\includegraphics[width=.274\textwidth]{sample-al} \label{fig:sample-al}}
	\subfloat[]{\includegraphics[width=.274\textwidth]{sample-hsd}\label{fig:sample-hsd}}
    \includegraphics[width=.18\textwidth]{sample-legend}
	\caption{Visualization of different layout pattern sampling methods: (a) Pattern matching; (b) Conventional active learning; (c) Proposed pattern sampling and hotspot detection flow.}
	\label{fig:sample}
\end{figure*}

Because it is extremely costly to label layout clips, 
our flow aims to sample as little number of clips as possible while ensuring good machine learning model generality.
Conventional layout pattern sampling methods conduct clustering on layout clips and obtain representative patterns based on the results of pattern matching or clustering.
Although the clustering can effectively reduce the sample number, it does not consider the behavior or requirement of, especially, machine learning-based hotspot detectors.
As shown in \Cref{fig:sample-pm}, pattern matching collects a lot of less critical patterns that lie far from the decision boundary while ignoring important patterns.
In conventional active learning-based sampling (see \Cref{fig:sample-al}), prediction uncertainty of each clip is included in the selection criteria.
That is, patterns with posterior probability around 0.5 will be sampled with higher priority.
However, in a layout pattern sampling and hotspot detection task, we care more about hotspot regions.
Therefore, apart from considering diversity of training instances, we tend to select clips with higher probability being hotspot in our sampling approach, as illustrated in \Cref{fig:sample-hsd}.

Discriminative machine learning models are usually designed to find the optimal hyperplane that separates the whole data space. 
The quality of a model is measured by its generative loss which is associated with the prediction error on the future instances.
In this section, we will discuss an instance selection policy considering both \textit{model uncertainty} and  \textit{data diversity},
thus the selected instances are expected to contribute most on the trained model generality.
The uncertainty describes how confident of the classifier when recognizing new instances.
A model is uncertain on a given instance if the prediction probability draws around 0.5 according to the posterior distribution or the instance is too close to the hypothesis plane in the feature space.
Data diversity corresponds to the instance distribution in the data set.
The underneath idea is to label instances into training set such that the training set entropy is maximized.
Most active learning algorithms, such as US, QBC and EMC, are designed to pick one instance in each iteration, which is not efficient as problem sizes grow.
Even these methods are applied for batch selection, samples touch the selection criteria are labeled into train set,
when redundant instance samples are more likely to be chosen.
Here we consider a \textbf{batch selection} mechanism that takes both model uncertainty and data diversity into account.

Given a training set $\mathcal{L}_t$ and an unlabeled set $\mathcal{U}_t$ at time $t$.
Let $\mathbf{w}_t$ be the classifier parameters trained on $\mathcal{L}_t$. 
The objective is to select a batch $\mathcal{B}$ with $k$ points from $\mathcal{U}_t$ so that the future learner $\mathbf{w}_{k+1}$,
trained on $\mathcal{L}_t \cup \mathcal{B}$, has maximum generalization capability.
Let $\mathcal{Y}=\{0,1\}$ be the set of possible classes in the problem.
For a given unlabeled layout clip $\mathbf{x}_i$, we denote the related posterior probability as $p(y|\mathbf{x}_i;\mathbf{w}_t)$.
Usually, the uncertainty of the unlabeled instance $\mathbf{x}_i$ is defined as the entropy of the predicted probabilities, as shown in \Cref{eq:entrp}.
\begin{align}
	c(i)= -\sum_{j \in \mathcal{Y}} p(y=j|\m{x}_i;\m{w}_t) \log p(y=j|\m{x}_i;\m{w}_t).
	\label{eq:entrp}
\end{align}
However, in the layout pattern sampling problems, problematic instances are of more interests.
We therefore pick a simple but more practical representation of $c(i)$,
\begin{align}
		c(i)=  p(y=1|\m{x}_i;\m{w}_t),
	\label{eq:ucthsd}
\end{align}
which corresponds to the probability of a given instance being hotspot.
Usually, the redundancy between unlabeled points $\mathbf{x}_i$ and $\mathbf{x}_j$ can be calculated through K-L divergence, 
which measures how two training instances differ from each other in a statistic point of view.
In the domain of layout hotspot detection, however, we are dealing with yes or no problem, which is less informative for diversity analysis. 
To benefit the layout analysis problem, we use inner-product of two instances in the normalized feature space as shown in \Cref{eq:kldf}.
\begin{align}
    E(i,j)=\m{x}_i^\top\m{x}_j. \label{eq:kldf}
\end{align}
We can further formulate the diversity matrix $\m{D} \in \mathbb{R}^{n\times n}$, whose entries are defined by \Cref{eq:kldf}.
 
Given the matrix $\m{D}$, the batch mode active learning problem is shown in mathematical formulation \eqref{eq:bmal},
where the objective is to select a batch of points with high aggregate uncertainty scores and high divergences among the samples.
\begin{subequations}
    \label{eq:bmal}	
    \begin{align}
        \min_{\m{m}} ~~~& \m{m}^\top\m{D} \m{m},      \tag{\ref*{eq:bmal}}\\
        \textrm{s.t}.~~ & m_i \in \{0, 1\},\forall i, \\
                        & \sum_{i} m_i = k,\forall i. 
    \end{align}
\end{subequations}
Here $k$ is the number of patterns that will be selected into the training set, $m_i$ is a binary variable, and $m_i = 1$ if pattern $\mathbf{x}_i$ is selected in the batch $\mathcal{B}$.
It should be noted that Formula \eqref{eq:bmal} is binary quadratic programming, which is NP-hard.
We relax the integer constrains and derive the following problem, 
\begin{subequations}
    \label{eq:bmalr}	
    \begin{align}
        \min_{\m{m}} ~~~& \m{m}^\top\m{D} \m{m},    \tag{\ref*{eq:bmalr}}\\
        \textrm{s.t}.~~ & m_i \in [0, 1],   \forall i, \\
                        & \sum_{i} m_i = k, \forall i, 
    \end{align}
\end{subequations}
which is a standard quadratic programming problem and can be solved efficiently.
It can be seen here one advantage of the proposed distance metric over KL-divergence and Euclidean distance is that \Cref{eq:kldf}
ensures the objectives of \eqref{eq:bmal} and \eqref{eq:bmalr} to be convex by $\m{D} \succeq 0$.
Finally, the integer solution can be recovered by picking $k$ largest entries in $\m{m}$.

The rapid development of deep neural networks makes it possible to learn representative features from raw image and complete effective classification jobs.
Therefore, in this project, we pick up a shallow convolutional neural networks as the preferred machine learning model which will be embedded into the active learning flow.
In particular, features obtained from the fully-connected layers are fed into \Cref{eq:kldf} to calculate the divergence matrix.
Most neural networks are trained with mini-batch gradient descent (MGD), where a random small batch of training samples are fed into the neural networks to update neuron weights.
The online property of MGD makes it easier to update the model on new instances without retraining the model from scratch compared to traditional support vector machine or logistic regression.
It should be noted that the proposed classification driven active learning flow is very general that it can be plugged into any incremental hotspot detectors.

In most cases $\m{D}$ will be extremely large, especially for EUV specific layers, which makes Formula~\eqref{eq:bmalr} hard to solve.
We therefore stochastically sample a subset $\hat{\mathcal{U}}_t \subseteq \mathcal{U}_t$  before entering the quadratic programming phase to further reduce the computational cost.
Finally, the neural network can be accordingly updated as $\mathbf{w}_{t+1} = \mathbf{w}_{t} + \alpha \cfrac{\partial l}{\partial \m{w}_t}$.
Here $\alpha$ denotes the updating rate and $l$ is the average cross-entropy loss of sampled instances, defined as follows:
\begin{align}
	l=\frac{1}{k} \sum_{i=1}^{k} \log p(y_i=1|\m{x}_i; \mathbf{w}_t).
\end{align}

It should be noted that although the neural networks may need multiple iterations to finish training, the computational cost is much less than training from a raw model.
\cite{HSD_DAC2017_Yang} has shown that biased label is able to provide better trade-offs on hotspot detection problem during the fine-tune procedure.
However, by our observation, stepped bias significantly disturbs the pre-trained model.
We therefore improves this technique by letting the bias change linearly along with the training step.

\begin{algorithm}[h]
	\caption{Batch Active Sampling}
	\label{alg:bas}
	\begin{algorithmic}[1]
		\Require $\mathcal{L}_0,\mathcal{U}_0,n,\sigma$.
		\Ensure $\m{w},\mathcal{D}$.
		\State Initialize $\mathbf{w}\sim \mathcal{N}(0, \sigma)$;
		\State $\mathcal{L}\leftarrow \mathcal{L}_0,\mathcal{U} \leftarrow \mathcal{U}_0,\mathcal{D}\leftarrow \emptyset$;
		\State $\m{w}\leftarrow$Train the machine learning model based on $\mathcal{L}$.
		\While{$\mathcal{U} \neq \emptyset$}
		\State  $\hat{\mathcal{U}}\leftarrow$ Sample $n$ instances with highest probability (predicted with current $\mathbf{w}$) being hotspot from $\mathcal{U}$;
		\State $\mathcal{U}\leftarrow \mathcal{U} \backslash \hat{\mathcal{U}}$;        
		\State $\mathcal{B}\leftarrow$Select $k$ instances by solving problem~\eqref{eq:bmalr};
		\State $\mathcal{L}\leftarrow \mathcal{L}\cup\mathcal{B}$;
		\State $\mathcal{D}\leftarrow\hat{\mathcal{U}}\backslash \mathcal{B}\cup \mathcal{D}$;
		\State $\mathbf{w}\leftarrow$Update machine learning model based on $\mathcal{L}$;
		\EndWhile
		\State \Return $\m{w},\mathcal{D}$.
	\end{algorithmic}
\end{algorithm}

\Cref{alg:bas} presents the details of the layout pattern sampling flow.
The algorithm requires an initial training set $\mathcal{L}=\mathcal{L}_0$ with labeled patterns, an unlabeled pattern pool $\mathcal{U}=\mathcal{U}_0$, 
number of patterns to be queried $n$ and a standard deviation $\sigma$ used to initialize the machine learning models (lines 1--2);
we first train an initial machine learning model based on $\mathcal{L}_0$ (line 3).
In each sampling iteration, we fetch $n$ instances from $\mathcal{U}$ without replacement and form a query set $\hat{\mathcal{U}}$ (lines 5--6);
$k$ instances are sampled into a set $\mathcal{B}$ by solving problem~\eqref{eq:bmalr} (line 7);
new training set $\mathcal{L}$, discarded set $\mathcal{D}$ and the machine learning model are updated accordingly (lines 7--9).
The algorithm ends when the unlabeled instance pool is empty and returns the trained model and remaining unlabeled patterns to be verified by the machine learning model.

Note that \Cref{alg:bas} requires an initial labeled dataset $\mathcal{L}_0$ to obtain a pre-trained model that will be used to extract features for future layout patterns.
Thus, $\mathcal{L}_0$ is critical on the performance of the whole flow.
Because it is almost impossible to know which pattern is more likely to have defects at beginning, we only consider the diversity of layout features.
Because learned features are not available without a trained CNN model, we design alternate features considering the lithography process.
In \Cref{sec:prelim}, we have shown that frequency components of layout patterns contributes most to layout printabilities,
we therefore perform feature tensor extraction \cite{HSD_DAC2017_Yang} on each layout clip.
To make the computation efficient, we pick only the second channel of the feature tensor as our feature vector to calculate the diversity matrix,
which corresponds to the frequency components that have largest information. 
The initial training set $\mathcal{L}_0$ can then be obtained by solving problem~\eqref{eq:bmalr} with updated $\mathbf{D}$.


\subsection{Algorithm Analysis}
\label{sec:analysis}
In this section, we will discuss and analysis some technique details of our proposed framework. 
As described in previous section, we relax the integer constraints when solving the sampling problem~\Cref{eq:bmal}.
Because each queried instance will be sampled or dropped by solving problem~\Cref{eq:bmalr}, the entries of the optimal solution will be rounded into binary values.
Here we will analysis the loss of optimality of problem~\Cref{eq:bmalr} when reconstructing an integer solution as the sampling choice, as claimed in Theorem~\ref{thm:maxgap}.
\begin{mytheorem}
	\label{thm:maxgap}
	Let $\mathbf{m}$ be the optimal solution of problem~\eqref{eq:bmalr} that is binarized into $\mathbf{m}_b$ by setting $k$ largest entries to 1 and rest $n-k$ entries to 0,
	then 
	\begin{align}
		f(\mathbf{m}) \le f(\mathbf{m}_b) \le 2f(\mathbf{m})+2\lambda_n (k-\frac{k^2}{n}),
	\end{align}
	where $f(\mathbf{x})=\mathbf{x}^\top\mathbf{D}\mathbf{x}$, $n$ is total number of instances in each query iteration, $k$ is the number of instances that will be sampled into training set and $\lambda_1 \le \lambda_2 \le \dots \le \lambda_n$ are the eigenvalues of $\mathbf{D}$. 
\end{mytheorem}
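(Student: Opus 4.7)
The plan is to prove the two inequalities separately, exploiting the fact that $\m{D}\succeq 0$ (guaranteed by the inner-product construction in \Cref{eq:kldf}). The lower bound $f(\m{m})\le f(\m{m}_b)$ is immediate: $\m{m}$ is the optimum of the relaxed problem~\eqref{eq:bmalr}, whose feasible set contains every $0/1$ vector with coordinate sum $k$, and $\m{m}_b$ is one such vector, so $\m{m}$ cannot have a larger objective value than $\m{m}_b$.

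For the upper bound I would write $\m{m}_b = \m{m} + \m{e}$ and expand
\begin{align*}
f(\m{m}_b) = f(\m{m}) + 2\m{m}^\top \m{D}\m{e} + f(\m{e}).
\end{align*}
Since $\m{D}\succeq 0$, the polarization inequality $2\m{m}^\top\m{D}\m{e} \le f(\m{m}) + f(\m{e})$ is just $(\m{m}-\m{e})^\top\m{D}(\m{m}-\m{e})\ge 0$ rearranged, so $f(\m{m}_b)\le 2 f(\m{m}) + 2 f(\m{e})$. The Rayleigh quotient further gives $f(\m{e})\le \lambda_n\|\m{e}\|^2$, and the whole argument then reduces to establishing the combinatorial bound $\|\m{e}\|^2 \le k - k^2/n$.

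This combinatorial bound is the main obstacle. Let $I$ be the top-$k$ index set of $\m{m}$ (so $m_{b,i}=1$ iff $i\in I$) and let $\tau = \min_{i\in I} m_i$; by construction $m_i\ge\tau$ on $I$ and $m_i\le\tau$ on $I^c$. The key step is the $\tau$-split: on $I$ we have $(1-m_i)^2 \le (1-\tau)(1-m_i)$, and on $I^c$ we have $m_i^2 \le \tau\, m_i$. Summing each side and writing $S=\sum_{i\in I}m_i$, the identity $\sum_i m_i = k$ collapses the upper bound to
\begin{align*}
\|\m{e}\|^2 \le (1-\tau)(k-S) + \tau(k-S) = k - S.
\end{align*}
A standard averaging argument then finishes things: the mean of the top-$k$ entries is at least the global mean $k/n$, so $S \ge k^2/n$, hence $\|\m{e}\|^2 \le k - k^2/n$.

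Chaining the three pieces yields $f(\m{m}_b) \le 2f(\m{m}) + 2\lambda_n(k - k^2/n)$ as claimed. The delicate part is engineering the $\tau$-split; the naive bounds $(1-m_i)^2\le 1-m_i$ and $m_i^2\le m_i$ only give $\|\m{e}\|^2\le 2(k-S)$ and would miss the stated constant by a factor of two.
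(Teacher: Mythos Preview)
Your argument is correct and follows the same three-stage skeleton as the paper: the lower bound is by feasibility of $\m{m}_b$ in the relaxation; the upper bound uses a convexity/polarization step to get $f(\m{m}_b)\le 2f(\m{m})+2f(\m{m}_b-\m{m})$, then Rayleigh--Ritz to replace $f(\m{e})$ by $\lambda_n\|\m{e}\|^2$, and finally the combinatorial estimate $\|\m{e}\|^2\le k-k^2/n$. Your polarization step via $(\m{m}-\m{e})^\top\m{D}(\m{m}-\m{e})\ge 0$ is exactly the paper's midpoint-convexity inequality $\tfrac12 f(\m{m})+\tfrac12 f(\m{m}_b-\m{m})\ge f(\tfrac12\m{m}_b)$ rewritten.

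Where you genuinely diverge is in the proof of $\|\m{m}_b-\m{m}\|^2\le k-k^2/n$. The paper first shows that top-$k$ rounding realises $\min_{\m{x}\in\mathcal{T}_b}\|\m{x}-\m{y}\|^2$ for any feasible $\m{y}$, then argues geometrically (via a regular-pyramid picture with apex at the origin and base in the hyperplane $\sum_i x_i=k$) that this minimum distance is maximised at the centroid $\m{y}=(k/n,\dots,k/n)$, giving the value $k-k^2/n$. Your $\tau$-split is a purely algebraic substitute: bounding $(1-m_i)^2\le(1-\tau)(1-m_i)$ on $I$ and $m_i^2\le\tau m_i$ on $I^c$ collapses $\|\m{e}\|^2$ to $k-S$ directly, and then the averaging bound $S\ge k^2/n$ finishes it. This avoids the geometric picture entirely and is more self-contained; the paper's route, on the other hand, makes explicit that the worst case for rounding is the uniform vector and that top-$k$ rounding is the nearest-binary-vector rule, which is informative in its own right.
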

\begin{proof}
	$f(\mathbf{m}) \le f(\mathbf{m}_b)$ is trivial, and we will show that $f(\mathbf{m}_b) \le 2f(\mathbf{m})+2\lambda_n (k-\frac{k^2}{n})$.
	According to Equation~\eqref{eq:kldf}, the distance matrix $\mathbf{D}$ can be written as $\mathbf{D}=\mathbf{F}^\top \mathbf{F}$,
	where each column of $\mathbf{F}$ is the feature vector of each queried instance, thus $\mathbf{D} \succeq \mathbf{0}$ and $f$ is convex.
	By definition,
	\begin{align}
		\frac{1}{2}f(\mathbf{m})+\frac{1}{2}f(\mathbf{m}_b-\mathbf{m}) \ge f(\frac{1}{2}\mathbf{m}_b),
	\end{align}
	i.e.
	\begin{align}
		\frac{1}{2} \mathbf{m}_b^\top \mathbf{D} \mathbf{m}_b - \mathbf{m}^\top \mathbf{D} \mathbf{m} \le (\mathbf{m}_b-\mathbf{m})^\top \mathbf{D}(\mathbf{m}_b-\mathbf{m}).
	\end{align}
	By Rayleigh-Ritz theorem \cite{MTRX_B2012_Gene},
	\begin{align}
		(\mathbf{m}_b-\mathbf{m})^\top \mathbf{D}(\mathbf{m}_b-\mathbf{m}) \le \lambda_n ||\mathbf{m}_b-\mathbf{m}||_2^2.
		\label{eq:rrthm}
	\end{align}
	Claim that 
	\begin{align}
		\label{eq:minmaxieq}
		||\mathbf{m}_b-\mathbf{m}||_2^2 \le \max_{\mathbf{y}\in \mathcal{T}}||\mathbf{m}_b-\mathbf{y}||_2^2 = \max_{\mathbf{y} \in \mathcal{T}} \min_{\mathbf{x} \in \mathcal{T}_b,\mathbf{y} \in \mathcal{T}} ||\mathbf{x}-\mathbf{y}||_2^2,
	\end{align}
	where $\mathcal{T}=\{\mathbf{x}\in \mathbb{R}^n|\sum_{1}^{n}x_i=k, x_i\in[0,1],\forall i\}$ and $\mathcal{T}_b=\{\mathbf{x}\in \mathbb{R}^n|\sum_{1}^{n}x_i=k, x_i\in\{0,1\},\forall i\}$.
	Without loss of generality, we assume all the entries of a given $\mathbf{y}$ are placed in an order
	\begin{align}
		1 \ge y_{\delta_1} \ge y_{\delta_2} \ge \dots \ge y_{\delta_n} \ge 0,
	\end{align}
	thus according to the rounding strategy, $\mathbf{m}_b$ is defined as follows,
	\begin{align}
		m_{b,i}=
		\begin{cases}
			1, &\forall i \in \{\delta_1, \delta_2, \dots, \delta_k\},\\
			0, &\text{otherwise}.
		\end{cases}
	\end{align}
	Then,
	\begin{align}
		||\mathbf{m}_b-\mathbf{y}||_2^2&=\sum_{i=1}^{k} (1-y_{\delta_i})^2 + \sum_{i=k+1}^{n} y_{\delta_i}^2 \nonumber \\
		&=k+\sum_{i=1}^{n} y_{\delta_i}^2 - 2\sum_{i=1}^{k} y_{\delta_i} \nonumber \\
		&\le k+\sum_{i=1}^{n} y_{\delta_i}^2 - 2\sum_{i=1}^{k} y_{\eta_i} \nonumber \\
		&=||\mathbf{x}-\mathbf{y}||_2^2, \forall \mathbf{x} \in \mathcal{T}_b, \mathbf{y} \in \mathcal{T},
	\end{align}
	as claimed in Equation~\eqref{eq:minmaxieq}.\\
	Consider a right pyramids with a regular base, which has its apex at the origin, $C^k_n$ edges defined by the vectors defined in $\mathcal{T}_b$ and a regular polygon base $A$ lies in the hyperplane $\sum_{i=1}^{n} x_i = k$.
	As shown in Figure~\ref{fig:thm1}, $||\mathbf{m}_b - \mathbf{m}||_2^2$ reaches its maximum value when $\mathbf{m}$ lies in the center of the pyramid base.
	\begin{figure}[t!]
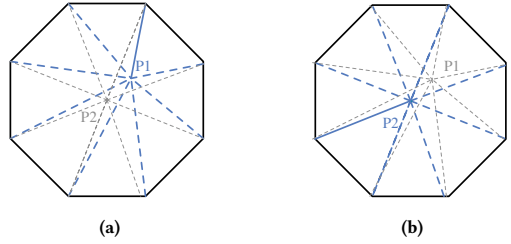

		\centering
        \subfloat[]{\includegraphics[width=.16\textwidth]{thm1-1} \label{fig:thm1-1}}  \hspace{.4in}
        \subfloat[]{\includegraphics[width=.16\textwidth]{thm1-2} \label{fig:thm1-2}}
		\caption{Geometric view of $\mathbf{m}_b - \mathbf{m}$. $P_1$ and $P_2$ denote the end points of $\mathbf{m}$, and in particular, $P2$ lies in the center of the base polygon. The solid segments in each figure represents $||\mathbf{m}_b - \mathbf{m}||_2^2$ for a given $\mathbf{m}$.}
		\label{fig:thm1}
	\end{figure}
	That is	$\mathbf{m}=[\frac{k}{n},\frac{k}{n},\dots,\frac{k}{n}]^\top$, and 
	\begin{align}
		\max_{\mathbf{m}_b, \mathbf{m}} ||\mathbf{m}_b - \mathbf{m}||_2^2 = k (1-\frac{k}{n})^2 + (n-k) (\frac{k}{n})^2=k-\frac{k^2}{n},
	\end{align}
	which, combined with Equation~\eqref{eq:rrthm}, justifies the theorem. 
\end{proof}
Theorem~\ref{thm:maxgap} provides a theoretical guidance on choosing proper $n$ and $k$ in the batch sampling procedure, 
which can also be intuitively explained by the fact that if we sampling all or one instances in each querying iteration,
we have no risk on the integer relaxation error, however, at the cost of diversity loss.

\subsection{The Flow Summary}
The proposed layout pattern sampling and hotspot detection flow is illustrated in \Cref{fig:flow}.
To analysis the printability of a full chip design, we dispatch the layout into clips based on the lithography proximity effect analysis (\Cref{sec:lpe}), 
such that the whole chip is covered by the core region of each clip that contains enough information to conduct printability estimation.
And then, the training set and the machine learning model will be updated until convergence (\Cref{sec:sample}), when all the clips will be either labeled or dropped.
Finally, the full chip hotspot detection will be conducted on the dropped clips with the final learning model.
\begin{figure}[tb!]
	\centering
	\includegraphics[width=0.45\textwidth]{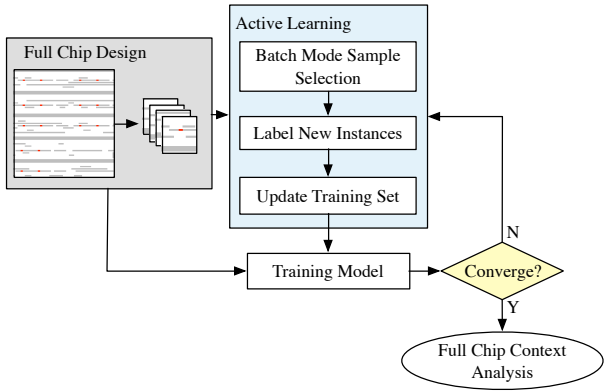}
	\caption{Pattern sampling and hotspot detection flow.}
	\label{fig:flow}
\end{figure}

\section{Experimental Results}
\label{sec:exp}
\subsection{Experimental Setup}
Our layout pattern sampling and hotspot detection flow is tested on ICCAD2012 (denoted as \texttt{ICCAD12}) \cite{HSD_ICCAD2012_Torres} and ICCAD2016 (denoted as \texttt{ICCAD16}) \cite{HSD_ICCAD2016_Rasit} CAD contest benchmark sets.

\begin{table}[b!]
	\centering
	\caption{Benchmark Details}
	\label{tab:bench}
	\renewcommand{\arraystretch}{1.1}
	\setlength{\tabcolsep}{1pt}
	\begin{tabular}{|c|c|c|c|c|}
		\hline
		Benchmarks  & CD ($nm$) & HS \# & NHS \# & Tech ($nm$) \\ \hline\hline
		\texttt{ICCAD12}   & 45           & 3728  & 159672 & 28        \\
		\texttt{ICCAD16-1} & 16             & 0     & 63     & 7         \\
		\texttt{ICCAD16-2} & 16             & 56    & 967    & 7         \\
		\texttt{ICCAD16-3} & 16             & 1100  & 3916   & 7         \\
		\texttt{ICCAD16-4} & 16             & 157   & 1678   & 7         \\ \hline
	\end{tabular}
\end{table}

\Cref{tab:bench} lists the benchmark details.
To verify the efficiency of our proposed method on EUV oriented designs, we shrink \texttt{ICCAD16} layouts to reach a CD under 7$nm$ technology node as indicated in the column ``CD ($nm$)''.
Columns ``HS \#'' and ``NHS \#'' are numbers of hotspot and non-hotspot clips in each benchmark
and ``Tech ($nm$)'' is the technology nodes of each design.
\texttt{ICCAD12} contains all the 28$nm$ clips with labels which can be directly input to our flow. Total count of hotspot clips and non-hotspot clips are 3728 and 159672, respectively.
\texttt{ICCAD16} contains four layouts that are original designed for fuzzy matching tasks.
To locate defects in those layouts, we apply industrial optical proximity correction (OPC) and layout manufacturability checker (LMC) tools on scaled layouts using EUV lithography models for 7$nm$ metal layer.
In the LMC stage, we only consider three types of defects that are edge placement error, bridge and neck which contribute most to circuit failures.
Then all the locations where edge placement error, bridging and necking occur are marked as defects.
To perform efficient and parallel testing, clip-based scan is usually applied in classic hotspot detection flow,
where the clip size and scanning stride are empirically determined according to the $D$ (by \Cref{eq:isodis}) under given lithography specifications.
Figure~\ref{fig:odvsacc} shows that fail detected hotspot count of exact pattern matching reduces to zero as clip size increases to around $3\times D=690nm$ that will be chosen as the clip size in our experiment.

\begin{figure}[tb!]
	\centering
	\subfloat[Influence of clip size]{\begin{filecontents}{svm.dat}
ratio	acc
0.4	0.906
0.9	0.989
1.3	0.995
1.7	0.997
2.2	0.999
2.6	0.999
3.0	1.000

\end{filecontents}

\pgfplotsset{
    width =0.14\textwidth,
    height=0.114\textwidth,
    xmin=0.3, xmax=3.1
}
\begin{tikzpicture}[scale=1]
\begin{axis}[minor tick num=0,
scale only axis,
ymax=100.0,
ymin=90,
yticklabel style={/pgf/number format/.cd, fixed, fixed zerofill, precision=2, /tikz/.cd},
y label style={at={(axis description cs:-0.35,0.5)},rotate=0,anchor=south},
xlabel={Clip Size ($\times D$)},
ylabel={Accuracy (\%)},
xlabel near ticks,
legend style={
  draw=none,
  at={(0.50,1.2)},
  anchor=north,
  legend columns=3,
}
]

\addplot +[line width=0.7pt] [color=item2p,   solid, mark=none]  table [x={ratio},  y expr=\thisrow{acc}*100]  {svm.dat};
\addplot +[line width=0.7pt] [color=item1,   dashed, mark=none] coordinates {(0.9, 0) (0.9, 1)};
\node[label={{[xshift=0cm, yshift=-0.4cm](200$nm$,98.9\%)}}] (s) at (axis cs:1.5,95) {};
\node (d) at (axis cs:0.9,98.9) {};
\draw[->] (s)--(d);


\end{axis}

\end{tikzpicture}\label{fig:odvsacc}}
	\subfloat[Clip-based scan]{\includegraphics[width=0.25\textwidth]{dispatch}\label{fig:dispatch}}
	\caption{Dispatching layouts based on estimated $D$. Because there is no spacing and overlapping between adjacent core regions of adjacent clips, each layout is fully scanned in the sampling and detection flow.
	Particularly, exact matching has a detection rate of 98.9\% with the clip size in the original contest setting \cite{HSD_ICCAD2016_Rasit}.}
\end{figure}

According to the estimated $D$ of 7$nm$ EUV lithography system, 
we adopt an overlapped dispatching method that covers the whole layout with reasonably small clip size that contains enough information to determine whether the center core region is hotspot or not.
\Cref{fig:dispatch} illustrates the details of the dispatching procedure.
We use a $690\times 690$ sliding window to scan the whole layout with scanning stride being $\frac{1}{3}$ of the clip size,
which ensures that center $230\times 230$ core regions of each clips are exactly covering the whole chip.
Note that to ensure that clips contain more than 96\% information to estimate the printability of their core region,
the smallest distance from the core boundary to the clip boundary is intentionally selected as 230$nm$.
Furthermore, each clip will be marked as hotspot clip if defects occur at its core region as shown in \Cref{fig:dispatch}.
Statistics of \texttt{ICCAD16} benchmarks are also listed in columns ``HS \#'' and ``NHS \#''.
We can notice that the smallest layout \texttt{ICCAD16-1} is defect-free, therefore the case \texttt{ICCAD16-1} is ignored in following experiments.
For the rest of \texttt{ICCAD16} cases, \texttt{ICCAD16-2} has 56 hotspots out of 1023 clips, \texttt{ICCAD16-3} has 1100 hotspots out of 5016 clips and \texttt{ICCAD16-4} has 157 hotspots out of 1835 clips.
It should be noted that although layout \texttt{ICCAD16-4} is much larger than other cases, it is much more regular and a large fraction of the patterns are clearly EUV friendly,
we therefore only extract clips from more problematic regions and that is why the total clip count is less than  \texttt{ICCAD16-3}.
We can also see the out of expected behaviors on \texttt{ICCAD16-4} in the experiments in the following sections.

To accommodate the shallow neural networks and the computational requirements, we conduct feature tensor extraction on each clips in all benchmark cases. 
The settings of \texttt{ICCAD12} and \texttt{ICCAD16} are listed in \Cref{tab:fte},
where we use the same settings as \cite{HSD_DAC2017_Yang} for case \texttt{ICCAD12}.
For the case \texttt{ICCAD16}, we pick a grid size that is close to the design pitch (i.e.~32$nm$).

\begin{table}[tb]
	\centering
	\caption{Feature Tensor Settings}
	\label{tab:fte}
	\renewcommand{\arraystretch}{1.2}
	\setlength{\tabcolsep}{8pt}
	\begin{tabular}{|c|ccc|}
		\hline
		Benchmarks & Grid  & Grid Size & Clip Size ($nm^2$)\\ \hline \hline
		\texttt{ICCAD12}  & 12$\times$12 & 100$\times$100 & 1200$\times$1200 \\
		\texttt{ICCAD16}  & 23$\times$23 & 30$\times$30  & 690$\times$690  \\ \hline
	\end{tabular}
\end{table}

\subsection{Effectiveness of Batch Active Sampling}

\begin{figure}[tb!]
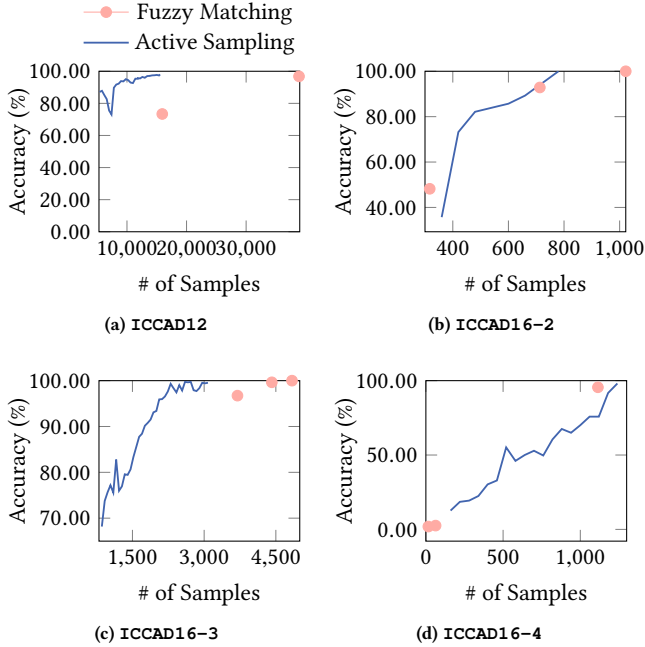

	\centering
	\subfloat[\texttt{ICCAD12}]  {\input{pgfplot/acc12} \label{fig:acc12}}
	\subfloat[\texttt{ICCAD16-2}]{\begin{filecontents}{svm.dat}
alpha   acc    fs alpha2 greedy fs2
360	0.3571	1	317	0.4821	11
420	0.7321	119	713	0.9286	4
480	0.8214	258	1022	1	0
540	0.8393	209			
600	0.8571	169			
660	0.8929	163			
720	0.9464	150			
780	1	164

\end{filecontents}

\pgfplotsset{
    width =0.15\textwidth,
    height=0.12\textwidth,
    xmin=300, xmax=1022
}
\begin{tikzpicture}[scale=1]
\begin{axis}[minor tick num=0,
scale only axis,
ymax=100,
yticklabel style={/pgf/number format/.cd, fixed, fixed zerofill, precision=2, /tikz/.cd},
y label style={at={(axis description cs:-0.3,0.5)},rotate=0,anchor=south},
xlabel={\# of Samples},
ylabel={Accuracy (\%)},
xlabel near ticks,
legend style={
  draw=none,
  at={(0.50,1.2)},
  anchor=north,
  legend columns=3,
}
]
\addplot[mark=*, color=item1] coordinates {(317,48.21)};
\addplot[mark=*, color=item1] coordinates {(713,92.86)};
\addplot[mark=*, color=item1] coordinates {(1022,100)};
\addplot +[line width=0.7pt] [color=item2p,   solid, mark=none]  table [x={alpha},  y expr=\thisrow{acc}*100]  {svm.dat};


\end{axis}

\end{tikzpicture}  \label{fig:acc2}}\\
	\subfloat[\texttt{ICCAD16-3}]{\input{pgfplot/acc3}  \label{fig:acc3}}
	\subfloat[\texttt{ICCAD16-4}]{\begin{filecontents}{svm.dat}
	alpha  hs acc     alpha2 greedy fs2
160.0	16.0	0.1265	16	0.0191	5
220.0	23.0	0.1852	63	0.0255	2
280.0	26.0	0.1935	1114	0.9554	20
340.0	27.0	0.2244
400.0	36.0	0.3026
460.0	40.0	0.3289
520.0	42.0	0.5513
580.0	49.0	0.4605
640.0	54.0	0.5
700.0	59.0	0.5287
760.0	62.0	0.4968
820.0	67.0	0.6051
880.0	73.0	0.6752
940.0	80.0	0.6497
1000.0	84.0	0.7006
1060.0	86.0	0.758
1120.0	90.0	0.758
1180.0	103.0	0.9172
1240.0	105.0	0.9809

\end{filecontents}

\pgfplotsset{
	width =0.15\textwidth,
	height=0.12\textwidth,
	xmin=0, xmax=1300
}
\begin{tikzpicture}[scale=1]
\begin{axis}[minor tick num=0,
scale only axis,
ymax=100,
yticklabel style={/pgf/number format/.cd, fixed, fixed zerofill, precision=2, /tikz/.cd},
y label style={at={(axis description cs:-0.3,0.5)},rotate=0,anchor=south},
xlabel={\# of Samples},
ylabel={Accuracy (\%)},
xlabel near ticks,
legend style={
	draw=none,
	at={(0.50,1.2)},
	anchor=north,
	legend columns=3,
}
]
\addplot[mark=*, color=item1] coordinates {(16,1.91)};
\addplot[mark=*, color=item1] coordinates {(63,2.55)};
\addplot[mark=*, color=item1] coordinates {(1114,95.54)};

\addplot +[line width=0.7pt] [color=item2p,   solid, mark=none]  table [x={alpha},  y expr=\thisrow{acc}*100]  {svm.dat};


\end{axis}
\end{tikzpicture}  \label{fig:acc4}}
	\caption{Learning model performance v.s.~sampling count. 
		The blue curve is the reference performance obtained from fuzzy matching with different area constrains reflected as different sampling count.
		The red curve shows the sampling results based on \Cref{alg:bas}.
		}
	\label{fig:bal}
\end{figure}

In the first experiment, we will compare the batch active sampling method with fuzzy matching under different area constraints.
The procedures of \Cref{alg:bas} on four benchmark sets are depicted in \Cref{fig:bal},
where the x-axis represents the total number of patterns sampled into training set and the y-axis denotes the detection accuracy.
According to the analysis in \Cref{sec:analysis}, we avoid choosing the $\frac{k}{n}$ that results in big rounding error (i.e.~0.5).
Considering that sample count also affects the training performance and the lithography simulation overhead, we pick $k=60,n=90$ in all the \texttt{ICCAD16} benchmarks.
On the other hand, \texttt{ICCAD12} is a much larger benchmark set that contains more than 150,000 clips, therefore a smaller $\frac{k}{n}$ (i.e.~0.05) is chosen to limit the total number of sampled clips.

The discrete dots in \Cref{fig:bal} correspond to fuzzy matching results with area constraints 90\%, 95\% and 100\%, respectively. 
It can be seen that our batch sampling converges at a reasonably high detection accuracy on both DUV and EUV specific layers
while requiring much less training instances than exact pattern matching.
In other words, our proposed method can significantly reduce lithography simulation overhead.
Particularly for the case \texttt{ICCAD12}, exact matching samples more than $10^5$ clips among the whole data set, while our method achieves similar results with only 5799 clips.

\begin{table*}[tb!]
	\centering
	\caption{Full chip pattern sampling and hotspot detection on ICCAD12/16 benchmarks.}
	\label{tab:rst}
	\renewcommand{\arraystretch}{1.2}
	\setlength{\tabcolsep}{1.6pt}
\begin{tabular}{|c|cc|cc|cc|cc|cc|cc|cc|}
	\hline
	\multirow{2}{*}{Benchmarks} & \multicolumn{2}{c|}{PM\_exact} & \multicolumn{2}{c|}{PM\_a95} & \multicolumn{2}{c|}{PM\_a90} & \multicolumn{2}{c|}{PM\_e2} & \multicolumn{2}{c|}{FT} & \multicolumn{2}{c|}{Greedy} & \multicolumn{2}{c|}{Ours} \\
	                    & Acc (\%)   & Litho     & Acc (\%)      & Litho         & Acc (\%)       & Litho         & Acc (\%)       & Litho      & Acc (\%)     & Litho    & Acc (\%)       & Litho      & Acc (\%)      & Litho     \\ \hline \hline
	\texttt{ICCAD12}    & 100.00     & 127746    & 96.83$^\dag$  & 38879$^\dag$  & 73.38$^\dag$   & 15923$^\dag$  & 100.00               &124320        &  32.14  &  20000&24.57     & 26945     & 98.02         & 16719      \\
	\texttt{ICCAD16-2}  & 100.00     & 1022      & 92.86         & 717           & 48.21          & 328           & 100.00         & 1022       &91.07        &782          & 51.79          & 475        & 100.00        & 944       \\
	\texttt{ICCAD16-3}  & 100.00     & 4838      & 99.64         & 4420          & 96.73          & 3717          & 99.91          & 4777      &86.18         &1854          & 73.82          & 2496       & 99.54         & 3824      \\
	\texttt{ICCAD16-4}  & 95.54      & 1134      & 2.55          & 65            & 1.91           & 20            & 78.34          & 842       &50.32         &573          & 50.32          & 668        & 98.09        & 1709       \\ \hline \hline
	Average             & 98.88      & 33685     & 72.97         & 11043         & 55.06          & 4997          & 94.56   &    32488  &  65.88     &  6118  &      50.12  &    7646   &    98.91         & 5799          \\
	Ratio               & 1.000      & 5.809     & 0.74          & 1.904         & 0.557          & 0.862         & 0.956      &   5.602   & 0.666     & 1.055  &    0.507     &  1.319  &  \textbf{1.000}       &     \textbf{1.000} \\ \hline
\end{tabular}
\begin{tablenotes}
    \item 
    \dag Experiments are conducted on the center $600\times600$ region of each clip  because the area constrained fuzzy matching cannot be finished within one week using original clip size of $1200\times1200$.
\end{tablenotes}
\end{table*}
\subsection{Comparison with Existing Methods}
We compare the sampling results on \texttt{ICCAD12/16} with exact/fuzzy matching methods and two recent sampling methods, as listed in \Cref{tab:rst}.
Columns ``PM\_exact'', ``PM\_a95'', ``PM\_a90'', ``PM\_e2'' correspond to the results derived from pattern matching using a state-of-the-art pattern analysis tool \cite{HSD_DAC2017_Chen},
where ``PM\_exact'' denotes only exactly same patterns can be clustered together,
``PM\_a95'' and ``PM\_a90'' refer to any clips that satisfy 95\% and 90\% area constraints are clustered together
and ``PM\_e2'' groups clips with less than 2$nm$ edge displacements.
Here the area and edge constraints are defined following \cite{HSD_ICCAD2016_Rasit}.
Column ``FT'' lists the result of clustering on frequency domain of layout patterns that is similar to the flow proposed in \cite{HSD_SPIE2014_Shim}.
Column ``Litho'' denotes the number of clips being labeled according to the lithography simulation results, including the clips sampled into training sets and all the detection extras.

``PM\_exact'', as the reference method, shows 100\% accuracy on \texttt{ICCAD12}, \texttt{ICCAD16-2} and \texttt{ICCAD16-3}.
According to the lithography simulation results of the layout in \texttt{ICCAD16-4}, 
we notice all defects appear at the patterns belong to a different design space,
which possibly makes the lithography model and optical proximity analysis inaccurate.
Therefore, we observe minor prediction error and extra on \texttt{ICCAD16-4}.
The result also shows exact pattern matching can achieve extremely high verification accuracy, however, at the cost of simulating and labeling more than 95\% clip patterns in the whole dataset. 
On the contrary, the proposed batch sampling method achieves almost the same detection accuracy querying only 17\% of total layout clips.
It should be noted that it is normal that the instances in a training set is not completely separable, which explains that our method behaves even better than exact pattern matching.
For three fuzzy matching options, varies area or edge constraints offers different level of trade-offs between verification performance and lithography overhead.
``PM\_a95'' and ``PM\_e2'' can still maintain good prediction accuracy on \texttt{ICCAD16-2} and \texttt{ICCAD16-3} with slightly less litho count,
but the total number of labeled instances is still much larger than our method.
Moreover, fuzzy matching fails to extract problematic instances on a more difficult testcase \texttt{ICCAD16-4} with looser constraints
that they all reach less than 50\% prediction accuracy.

\cite{HSD_DAC2013_Zhang,HSD_JM3_2015_Shim} propose to use the frequency domain representation to sample layout patterns with similar property and detect hotspots.
Here we conduct additional experiments by clustering layout clips based on their Fourier Transform results.
Clips closest to a cluster center will be selected as the representative clip that indicates the property of the whole cluster.
By the results in the column ``FT'', we can observe that with similar sampling number, batch active sampling exhibits much better than frequency domain clustering.

We also conduct an experiment using greedy sampling method \cite{HSD_ICCAD2016_Zhang} where each instance being predicted as hotspot will be incrementally added into the training set.
Although greedy sampling method can successfully select partial hotspot clips in some test cases, the performance is highly affected by the initial learning model.
As listed in ``Greedy'', the greedy method \cite{HSD_ICCAD2016_Zhang} only achieves 50.12\% detection accuracy on average.

\section{Conclusion}
\label{sec:conclu}
A layout pattern sampling and hotspot detection flow is proposed to adaptively sample layout patterns into a pattern library that is used to train a machine learning model for layout hotspot detection.
The diversity-aware batch sampling and the interactive optimization of learning model can efficiently select interesting patterns and ensure a better model generality.
Experiments show that the proposed framework is able to achieve similar detection accuracy requiring less than 20\% of labeled patterns,
which reduces lithography simulation overhead by a significant amount.
Continuing study on model robustness, feature extraction and training set initialization are also interesting to fit the proposed framework better on modern IC design requirements.

{
\normalsize
\bibliographystyle{IEEEtran}
\bibliography{./ref/Top-sim,./ref/DFM,./ref/HSD,./ref/Software,./ref/LEARN,./ref/DL,./ref/MTRX}
}


\end{document}